\gdef\@copyrightpermission{
 \begin{minipage}{0.3\columnwidth}
  \href{https://creativecommons.org/licenses/by/4.0/}{\includegraphics[width=0.90\textwidth]{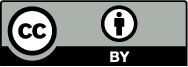}}
 \end{minipage}\hfill
 \begin{minipage}{0.7\columnwidth}
  \href{https://creativecommons.org/licenses/by/4.0/}{This work is licensed under a Creative Commons Attribution International 4.0 License.}
 \end{minipage}
 \vspace{5pt}
}
\newcommand{\parheading}[1]{\medskip\noindent\textbf{\textit{#1}}\hspace{2pt}}
\renewcommand{\AE}[0]{AE\xspace}
\newcommand{\re}[0]{RE}
\newcommand{\ml}[0]{ML\xspace}
\newcommand{\dnn}[0]{DNN\xspace}
\newcommand{\imagenet}[0]{ImageNet\xspace}
\newcommand{\cifarten}[0]{CIFAR-10\xspace}
\newcommand{\fgsm}[0]{FGSM\xspace}
\newcommand{\ifgsm}[0]{I-\fgsm}
\newcommand{\mifgsm}[0]{MI-\fgsm}
\newcommand{\lpnorm}[1]{$\ell_{#1}$\xspace}
\newcommand{\chanshuffle}[0]{CS}
\newcommand{\greyscale}[0]{GS}
\newcommand{\colorjitter}[0]{CJ}
\newcommand{\fancyPCA}[0]{fPCA}
\newcommand{\cutout}[0]{CutOut}
\newcommand{\cutmix}[0]{CutMix}
\newcommand{\randerase}[0]{RE}
\newcommand{\sharpen}[0]{Sharp}
\newcommand{\neuraltrans}[0]{NeuTrans}
\newcommand{\autoaug}[0]{AutoAugment}
\newcommand{\dstonly}[0]{DST\xspace}
\newcommand{\dst}[0]{\dstonly{}}
\newcommand{\admix}[0]{Admix\xspace}
\newcommand{\admixdt}[0]{\admix{}-DT}
\newcommand{\SDTM}[0]{\dst}
\newcommand{\ASDTM}[0]{\admixdt}
\newcommand{\ultimatecombo}[1]{\textsc{UltComb\textsubscript{#1}}\xspace}
\newcommand{\bestcombo}[0]{\ultimatecombo{Base}}
\newcommand{\bestcomboadv}[0]{\ultimatecombo{Gen}}
\newcommand{\advancedultimatecombo}[0]{\ultimatecombo{Gen}}
\newcommand{\gsdt}[0]{\greyscale-DST}
\newcommand{\undp}[0]{UNDP-DT}
\newcommand{\subsetattack}[1]{\bestcomboadv-#1\xspace}
\newcommand{\gdst}[0]{\gsdt}
\newcommand{\bitred}[0]{Bit-Red\xspace}
\newcommand{\nrp}[0]{NRP\xspace}
\newcommand{\randsmooth}[0]{RS\xspace}
\newcommand{\arandsmooth}[0]{ARS\xspace}
\newcommand{\TRS}[0]{TRS\xspace}
\newcommand{\sdtv}[0]{VMI-DST}
\newcommand{\figref}[1]{Fig.~\ref{#1}}
\newcommand{\tabref}[1]{Tab.~\ref{#1}}
\newcommand{\twotabrefs}[2]{Tabs.~\ref{#1}--\ref{#2}}
\newcommand{\secref}[1]{\S\ref{#1}}
\newcommand{\secrefs}[2]{\S\ref{#1}--\ref{#2}}
\newcommand{\appref}[1]{App.~\ref{#1}}
\newcommand{\algoref}[1]{Alg.~\ref{#1}}
\newcommand{\vit}[0]{ViT}
\newcommand{\mnas}[0]{MNAS }
\begin{document}

\title{The Ultimate Combo: Boosting Adversarial Example Transferability
  by Composing Data Augmentations}

\author{Zebin Yun}
\email{zebinyun@mail.tau.ac.il}
\affiliation{%
  \institution{Tel Aviv University}
  \city{Tel Aviv}
  \country{Israel}
}
\author{Achi-Or Weingarten}
\email{achi.wgn@gmail.com}
\affiliation{%
  \institution{Weizmann Institute of Science}
  \city{Rehovot}
  \country{Israel}
}
\author{Eyal Ronen}
\authornote{Corresponding authors.}
\email{eyalronen@tauex.tau.ac.il}
\affiliation{%
  \institution{Tel Aviv University}
  \city{Tel Aviv}
  \country{Israel}
}
\author{Mahmood Sharif}
\authornotemark[1]
\email{mahmoods@tauex.tau.ac.il}
\affiliation{%
  \institution{Tel Aviv University}
  \city{Tel Aviv}
  \country{Israel}
}

\renewcommand{\shortauthors}{Zebin Yun, Achi-Or Weingarten, Eyal Ronen, \& Mahmood Sharif}

\begin{abstract}

To help adversarial examples %
generalize from surrogate
machine-learning (\ml) models to targets, certain
transferability-based black-box evasion attacks incorporate data
augmentations (e.g., random resizing). Yet, prior work has explored
limited augmentations and their composition. To fill the gap, we
systematically studied how data augmentation affects 
transferability. Specifically, we explored 46 augmentation techniques
originally proposed to help \ml{} models generalize
to unseen benign samples, and assessed how they impact
transferability, when applied individually or composed. Performing
exhaustive search on a small subset of augmentation techniques and
genetic search on all techniques, we identified augmentation
combinations that help promote transferability.
Extensive experiments with the \imagenet{} and \cifarten{} datasets
and 18 models showed that simple color-space augmentations (e.g.,
color to greyscale) attain high transferability when combined with
standard augmentations. Furthermore,
we discovered that composing augmentations impacts transferability
mostly monotonically (i.e., more augmentations $\rightarrow$
$\ge$transferability). We also found
that the best composition
significantly outperformed the state of the art (e.g., 91.8\%
vs.\ $\le$82.5\% average transferability to adversarially trained
targets on \imagenet{}). Lastly, our 
theoretical analysis, backed by empirical evidence, intuitively
explains why certain augmentations promote transferability.

\end{abstract}

\begin{CCSXML}
<ccs2012>
<concept>
<concept_id>10010147.10010257.10010293.10010294</concept_id>
<concept_desc>Computing methodologies~Neural networks</concept_desc>
<concept_significance>500</concept_significance>
</concept>
<concept>
<concept_id>10002978.10003022</concept_id>
<concept_desc>Security and privacy~Software and application security</concept_desc>
<concept_significance>300</concept_significance>
</concept>
</ccs2012>
\end{CCSXML}
\ccsdesc[500]{Computing methodologies~Neural networks}
\ccsdesc[300]{Security and privacy~Software and application security}

\keywords{Adversarial Examples, Transferability, Neural Networks}

\maketitle

\section{Introduction}

Adversarial examples (\AE{}s)---variants of benign inputs minimally perturbed
to induce misclassification at test time---have emerged as a profound
challenge to machine learning (\ml{})
~\cite{biggio2013evasion,   szegedy2013intriguing}, calling its use in security- and
safety-critical systems into question (e.g.,~\cite{eykholt2018robust}). 
Many attacks have been proposed to generate \AE{}s in white-box
settings, where adversaries are 
familiar with all the particularities of the attacked
model~\cite{papernot16limitations}. By contrast, black-box attacks
enable evaluating the vulnerability of \ml{} in realistic settings, 
without access to the model~\cite{papernot16limitations}.

Attacks exploiting the transferability-property of
\AE{}s~\cite{szegedy2013intriguing} have received special
attention. Namely, as \AE{}s produced against one model are often
misclassified by others, 
transferability-based attacks produce \AE{}s against surrogate
(a.k.a. substitute) white-box models to mislead black-box ones. To
measure the risk of \AE{}s in black-box settings accurately,
researchers have proposed varied methods to enhance
transferability (e.g.,~\cite{lin2019nesterov, liu2016delving, li2023towards}). %
Notably, attacks using data augmentation, such as
translations~\cite{dong2019evading} and scaling of pixel
values~\cite{lin2019nesterov}, as a means to improve the
generalizability of \AE{}s across models have accomplished
state-of-the-art transferability rates. Still,  previous
transferability-based attacks have studied only six augmentation
methods (see~\secref{sec:dataaug:prev}), out of many proposed in the
data-augmentation literature, %
primarily for 
reducing model overfitting~\cite{shorten2019survey}. Hence, the extent to which different
data-augmentation types boost transferability, either individually or
when combined, remains largely unknown.

To fill the gap, we conducted a systematic study of how
augmentation methods influence transferability. Specifically,
alongside techniques considered in previous work, we studied how
46 augmentation techniques pertaining to seven categories impact
transferability when applied individually or composed
(\secref{sec:dataaug}). Integrating augmentation methods into attacks
via a flexible framework (\algoref{alg:dataaug}), we
searched for augmentation-combinations that can help boost
transferability via inefficient but optimal exhaustive search on a
small subset of augmentations and efficient, near-optimal genetic
search on a search space containing all augmentations
(\secref{sec:ultcombo}). We conducted extensive experiments using an
\imagenet{}-compatible dataset, \cifarten{}~\cite{krizhevsky09cifar},
and 18 models, %
and measured transferability in diverse settings, including with and
without defenses (\secrefs{sec:setup}{sec:results}). Furthermore, by
studying the impact of augmentations on model gradients, we offer a
theoretical explanation for why certain augmentations can promote
transferability (\secref{sec:theory}) that we later support with
empirical results (\secref{sec:res:theorysupport}).

In a nutshell, we make the following contributions:
\begin{itemize}[leftmargin=*]
  \itemsep0em
  \item We find that simple color-space augmentations exhibit
    performance commensurate with that of state-of-the-art
    techniques. Notably, they  surpass state-of-the-art techniques in
    terms of transferability to adversarial training models, while
    simultaneously offering a reduction in running time
    costs (\secref{sec:res:singleaugs} and \secref{sec:time}). 
  \item We propose parallel composition to integrate a large
    number of augmentations into attacks
    (\secref{sec:dataaug:compose}) and find it 
    boosts transferability compared to composition approaches
    previously used (\secref{sec:res:ParVsSer}). Further, with parallel
    composition, we 
    discover that transferability has a mostly monotonic relationship
    with the new data-augmentation techniques we introduce and ones
    used in prior work (\secref{sec:res:monotonic}). 
  \item We show that attacks integrating the best
    augmentation combinations discovered by exhaustive and genetic
    search (\bestcombo{} and \bestcomboadv{}, respectively) 
    outperform state-of-the-art attacks by a large margin
    (\secref{sec:res:bestcombo}). 
  \item We theoretically demonstrate that augmentations can smoothen
    surrogate models' gradients, which in turn can
    improve transferability (\secref{sec:theory}), and empirically
    back the theory (\secref{sec:res:theorysupport}).
\end{itemize}

\section{Background and Related Work}

\parheading{Evasion Attacks}
Many evasion attacks assume adversaries have white-box
access to models---i.e., adversaries know models' architectures and weights
(e.g.,~\cite{goodfellow2014explaining, szegedy2013intriguing,
  carlini2017towards}). These %
typically leverage first- or
second-order optimizations to generate \AE{}s models would
misclassify. For example, given an input $x$ of class $y$, model
weights $\theta$, and a loss function $J$, the Fast Gradient Sign method
(\fgsm{}) of~\cite{goodfellow2014explaining}, crafts an \AE{} $\hat{x}$ using the loss gradients $\nabla_x J(x,y,\theta)$:
$$\hat{x}=x+\epsilon*\operatorname{sign}(\nabla _{x}J(x,y,\theta))$$
where $\operatorname{sign}(\cdot)$ maps real numbers to -1, 0, or 1,
depending on their sign. Following \fgsm{}, researchers proposed
numerous advanced attacks. Notably, iterative \fgsm (\ifgsm{})
performs multiple update steps to $\hat{x}$ to evade models~\cite{kurakin2018adversarial}:
$$\quad \hat{x}_{t+1}=\operatorname{Proj}_x^\epsilon\biggl(\hat{x}_t + \alpha \cdot \operatorname{sign}\Big(\nabla_x J\left(\hat{x}_t, y, \theta\right)\Bigr)\biggr)$$
where $\operatorname{Proj}_x^\epsilon(\cdot)$ projects the perturbation
into \lpnorm{\infty}-norm $\epsilon$-ball centered at $x$, $\alpha$
is the step size, and $\hat{x}_0=x$. In this work, we study  %
attacks based on \ifgsm{}.

In practice, adversaries often lack white-box access to victim
models. Hence, researchers studied black-box attacks  in which
adversaries may only query models. Certain attack types, such as
score- and boundary-based attacks perform multiple queries, often
around several  thousands, to produce \AE{}s 
(e.g.,~\cite{brendel2017decision, ilyas2019prior}).
By contrast, attacks leveraging \emph{transferability}
(e.g.,~\cite{goodfellow2014explaining, szegedy2013intriguing}) avoid
querying victim models, and use surrogate white-box models to create
\AE{}s that are likely misclassified by other black-box ones.

Enhancing transferability is an active research area.
Some methods integrate momentum into attacks such as
\mifgsm{} to avoid surrogate-specific optima and saddle points that
may hinder transferability (e.g.,~\cite{dong2018boosting,
  wang2021vmi,wang2021boosting}). Others employ specialized 
losses, such as reducing the variance of intermediate
activations~\cite{huang2019enhancing} or the mean loss of model 
ensembles~\cite{liu2016delving}, to enhance transferability. Lastly,
a prominent family of attacks
leverages data augmentation to enhance \AE{}s' 
generalizability between models.
For instance, Dong et al.\ boosted transferability by
integrating random translations into \ifgsm{}~\cite{dong2019evading}.
Evasion attacks incorporating data augmentation attain
state-of-the-art transferability rates~\cite{lin2019nesterov,
  wang2021vmi, Wang21Admix}. Nonetheless, prior work has only considered a
restricted set of four augmentation methods for boosting
transferability (see \secref{sec:dataaug:prev}). By contrast, we %
investigate augmentations' role
at enhancing transferability more systematically, by
exploring how a more comprehensive set of augmentation types and their
compositions affect transferability.
Some efforts explored how augmentation methods used during training
affect transferability~\cite{Mao22Transfer,
  zhang2023does}. Unlike these, 
we study augmentations' role in boosting
transferability when incorporated into  attacks.

\parheading{Defenses}
Various defenses were proposed to mitigate evasion
attacks. Adversarial training---a procedure integrating correctly labeled
\AE{}s in training---is of the most
effective methods for enhancing adversarial robustness
(e.g.,~\cite{goodfellow2014explaining, tramer2017ensemble}). %
Other defense methods sanitize inputs prior to classification
(e.g.,~\cite{guo2017countering}); %
attempt to detect attacks (see~\cite{tramer2022detecting}); or seek
to certify robustness in $\epsilon$-balls around inputs
(e.g.,~\cite{cohen2019certified, salman2019provably}). Following 
standard practices in the literature~\cite{Wang21Admix}, we
evaluate transferability-based attacks against a representative set of
these defenses. %

\section{Data Augmentations for Transferability}
\label{sec:dataaug}

Data augmentation is traditionally used in training, to
reduce overfitting and improve generalizability~\cite{shorten2019survey}.
Inspired by this use, transfe\-rability-based attacks
adopted data augmentation
to limit overfitting to surrogate
models and produce
\AE{}s likely to generalize and be misclassified by victim
models. \algoref{alg:dataaug} presents a general framework for
integrating data augmentation into \ifgsm{} with momentum
(\mifgsm{})~\cite{dong2018boosting}.
In the framework, a method $D(\cdot)$ augments the attack
with $m$ variants of the estimated \AE{} at each
iteration. Consequently, the adversarial perturbation found by the
attack increases the expected loss over transformed counterparts of
the benign sample $x$ (i.e., the distribution set by $D(\cdot)$ given
$x$). %
\begin{algorithm}[t!]
\caption{\label{alg:dataaug}\mifgsm{} with augs.}
\begin{algorithmic}[1]
\STATE {\bfseries Input:} Sample $x$; label $y$;
  loss $J(\cdot)$; model params.\ $\theta$; \# iters.\ $T$;
    momentum $\mu$; norm-bound $\epsilon$;
    method $D(\cdot)$ producing $m$ augmented samples; step size $\alpha$.
    \STATE $\alpha=\left.\epsilon/T\right.$\;
    \STATE $\hat{x}_{0}=x$ 
    \STATE $g_0=0$ 
    \FOR{$t=0$ to $T-1$}
        \STATE$\bar{g}_{t+1}=\frac{1}{m} \sum_{i=0}^{m-1} \nabla_{x}\left(J\left(D(\hat{x}_t)_i, y, 
        \theta\right)\right)  $  
        \STATE $g_{t+1}=\mu \cdot g_t+\frac{\bar{g}_{t+1}}{\left\|\bar{g}_{t+1}\right\|_1}$ 
        \STATE$ \hat{x}_{t+1}=\operatorname{Proj}_x^\epsilon\big(\hat{x}_t+\alpha \cdot \operatorname{sign}\left(g_{t+1}\right)\big)$ 
    \ENDFOR
    \STATE \textbf{return} $\hat{x}=\hat{x}_T$
\end{algorithmic}
\end{algorithm}

The framework in \algoref{alg:dataaug} is flexible, and can admit any
augmentation method. We use it to describe previous attacks employing
augmentations and to systematically
explore new ones. Next, we detail previous attacks, describe
augmentation methods we adopt for the first time to enhance
transferability, and explain how these can be combined for best
performance.%

\subsection{Previous Attacks Leveraging Augmentations}
\label{sec:dataaug:prev}

Prior work explored the following augmentation methods (to set
$D(\cdot)$ in \algoref{alg:dataaug}).

\parheading{Translations}
Using random translations of inputs,
Dong et al.\ proposed a translation-invariant attack
to promote transferability~\cite{dong2019evading}.
They also offered an
optimization to reduce the attack's time and space complexity by
convolving the model's gradients
(w.r.t. non-translated inputs) with a Gaussian kernel.
While we use this
optimization in the implementation in the interest of efficiency, we
highlight that \algoref{alg:dataaug} can capture the attack.

\parheading{Diverse Inputs}
Xie et al.\ proposed a size-invariant attack. Their
augmentation randomly crops $\hat{x}_t$ and resizes
crops per the model's input dimensionality~\cite{xie2019improving}.

\parheading{Scaling Pixels}
Lin et al.\ showed that adversarial perturbations invariant
to scaling pixel values transfer with higher success between deep
neural networks (\dnn{}s)~\cite{lin2019nesterov}.
In their case,
$D(\cdot)$ produces $m$ samples such that
$D(x)_i = \frac{x}{2^i}$ for $i\in\{0, 1, ..., m-1\}$, where $m$=5 by default.

\parheading{Admix}
Wang et al.\ assumed that the adversary has a gallery of images
from different classes and adopted augmentations similar to
\emph{MixUp}~\cite{Wang21Admix, zhang2017mixup}. For each sample
$x^{\prime}$ from the gallery (typically selected at random from the batch), Admix augments attacks with $m$
(typically set to 5) samples, such that
$D(x, x^{\prime})_{i}=\frac{1}{2^i} \cdot{}(\hat{x}_t+\eta\cdot{}x^{\prime})$,
where $i\in\{0, 1, ..., m-1\}$, and $\eta\in [0, 1]$ is
set to 0.2 by default. Notably, Admix degenerates
to {\it scaling pixels} when $\eta=0$.

\parheading{Uniform Noise (UN)} Li et al.\ added uniform noise to samples to
produce uniform-noise-invariant attacks with enhanced 
transferability~\cite{li2023towards}. 

\parheading{Dropping Patches (DP)}
Li et al.\ and Wang et al.\ divided perturbations into (16$\times$16)
grids and randomly sampled parts of the grid to
drop~\cite{li2023towards, wang2020unified}.

\smallskip{}
Leading transferability-based attacks compose \emph{(1)}
diverse inputs, scaling, and translations
(Lin et al.'s \dst{} attack~\cite{lin2019nesterov},
and Wang et al.'s \sdtv{} attack that
also tunes the gradients' variance~\cite{wang2021vmi});
or \emph{(2)} \admix{}, diverse inputs, and translation
(Wang et al.'s \admixdt{} attack~\cite{Wang21Admix}); or
\emph{(3)} uniform noise, dropping patches, diverse inputs, and
translations (Li et at.'s \undp{}~\cite{li2023towards}). We consider
these attacks as baselines in our experiments (\secrefs{sec:setup}{sec:results}).

\subsection{New Augmentations for Boosting Transferability}
\label{sec:dataaug:new}

While prior work studied the effect of some data-augmentation methods'
effect
on transferability, a substantially wider range of
data-augmentation methods exist. Yet, the impact of these
on transferability remains unknown. To fill the gap,
we examined Shorten et al.'s
survey on data augmentation~\cite{shorten2019survey}
as well as two prominent image-augmentation libraries~\cite{imgaug,
  torchvision} 
for reducing overfitting in deep learning to identify a comprehensive
set of augmentation methods that may boost transferability.
Overall, we identified 46 representative methods of seven categories
to evaluate. 
We present a representative subset of %
augmentations per category below; \appref{app:otheraug} lists the
remaining augmentations. %

\parheading{Color-space Transformations}
Potentially the simplest augmentation types are those
applied in color-space. %
Given images represented as three-channel tensors, methods in this
category manipulate pixel values only based on information encoded in
the tensors. We evaluate 12 color-space transformations.
Among them, we consider \textit{color jitter (\colorjitter{})},
which applies random color manipulations~\cite{wu2015deep}. 
Specifically, we consider random adjustments of pixel values within a
pre-defined ranges of hue, contrast, saturation, and
brightness. %
Additionally, we consider
\textit{greyscale (\greyscale{})} augmentations.
This simple augmentation converts images into greyscale
(replicating it three times to obtain an RGB representation).
Mathematically, the conversion is calculated by
$\omega_R\cdot{}x^R+\omega_G\cdot{}x^G+\omega_B\cdot{}x^B$,
where $x^R$, $x^G$, and $x^B$, correspond to the RGB channels, 
respectively, and $\omega_R$, $\omega_G$, and $\omega_B$, all
$\in [0, 1]$, denote the channel weights, and sum up to 1. 

\parheading{Random Deletion}
Different from DP, which drops random regions from
\emph{perturbations}, we consider random deletions of \emph{input samples'} regions, replacing
them with others values. For example,
inspired by dropout regularization, \textit{random erasing
    (\randerase{})} helps \ml{} models focus on descriptive
features of images and promote robustness to
occlusions~\cite{zhong2020random}. To do so, randomly 
selected rectangular regions in images are replaced by masks composed of
random pixel values. Similarly to \randerase{},
\textit{\cutout{}} masks out regions of inputs to
improve \dnn{}s' accuracy~\cite{devries2017improved}. The main
difference from \re{} is that \cutout{} uses fixed masking values,
and may perform less aggressive masking when selected regions lie
outside the image.

\parheading{Kernel Filters}
Convolving images with filters of different types can produce certain
effects, such as blurring (via Gaussian kernels), sharpening (via
edge filters), or edge enhancement. We study the effect of 13 %
filters on transferability, including edge-enhancement via
\textit{sharpening (\sharpen{})}.

\parheading{Mixing Images}
Some augmentation methods (e.g., Admix) fuse
images together. %
We consider \textit{\cutmix{}}, which replaces a region within one
image with a region from another image picked from a
gallery~\cite{yun2019cutmix}. 

\parheading{Style Transfer}
Certain augmentations change the image style. We study six of these
in this work. Among the six, we also consider augmentations using
\textit{neural transfer (\neuraltrans{})}  
to preserve image semantics while changing their style. Specifically, we
use~\cite{gatys2015neural}'s  generative model to transfer image
styles to that of Picasso's 1907 self-portrait.

\parheading{Meta-learning-inspired Augmentations}
Meta-learning is a subfield of \ml{} studying how \ml{}
algorithms can optimize other learning algorithms~\cite{hospedales2021meta}. 
In the context of data augmentation, algorithms such as
\textit{\autoaug{}} have been proposed to train controllers
to select an appropriate augmentation method to avoid
overfitting~\cite{cubuk2018autoaugment}. We use the pre-trained
\autoaug{} controller, encoded as a recurrent neural network, to
select augmentation methods and their magnitude from a set of
pre-defined augmentation methods.

\parheading{Spatial Transformations}
Augmentations performing spatial transformations alter the locations
of pixels in the image. Random translation and resizing
(\secref{sec:dataaug:prev}) belong to this category. We consider seven
additional augmentations that perform spatial transformation,
including random rotations, affine transformations, and horizontal
flipping.

\subsection{Composing Augmentations}
\label{sec:dataaug:compose}

There exist two ways to
compose data-augmentation methods in attacks,
namely: \emph{parallel} and \emph{serial} composition. 
In parallel composition, each augmentation method is applied independently
on the input, and their outputs are aggregated by taking their union
as $D(\cdot)$'s output to augment attacks. By contrast, serial composition
applies augmentations sequentially, one after the other, where the
first method operates on the original sample, and each subsequent
augmentation operates on its predecessor's outputs. Consequently,
serial composition leads to an {\em exponential} growth in the number of samples,
while parallel composition leads to a {\em linear} growth. 
All baselines we consider in this work~\cite{lin2019nesterov,
  wang2021vmi, Wang21Admix, li2023towards} %
employ serial composition, as this is
the only composition method previously used. Compared to prior work, %
we consider a substantially larger number of augmentation methods, which
may lead to prohibitive memory and time requirements in the case of serial
composition. Additionally, because the order of applying certain
augmentations matters (e.g., \greyscale{} then \cutmix{} leads
to different outcome that \cutmix{} followed by \greyscale{}),
exploring a meaningful number of serial compositions (e.g., out of
$\ge$46$!$ possibilities) becomes virtually impossible. Moreover,
serial composition of images may distort images drastically, thus
degrading transferability, as we find (\secref{sec:ParVsSer}). 
Accordingly, we mainly consider parallel composition between data-augmentation methods.
We \emph{only} serially compose translations, scaling, and diverse inputs, for
consistency with prior work (e.g.,~\cite{Wang21Admix}).

\subsection{Discovering Effective Compositions}
\label{sec:ultcombo}

We use two approaches to discover augmentation compositions that
promote transferability. As a simple, yet effective method, we
employ exhaustive search on a small set of augmentations. We note that,
while Li et al.\ considered a similar approach~\cite{li2023towards},
they only \emph{(1)} evaluated augmentations previously established
as conducive transferability, as opposed the the new augmentations we
incorporate; and \emph{(2)} tested serial compositions of
augmentations, which we find to perform worse than parallel
composition %
(\secref{sec:res:ParVsSer}). %
Exhaustive search enabled us to uncover a roughly monotonic relationship between augmentations and
transferability (\secref{sec:res:monotonic}).
Additionally, we apply genetic search over the full set of
augmentations. Here,
informed by the roughly monotonic relationship between augmentations and
transferability, we bias the search hyperparameters toward including
augmentations within combinations to speed up convergence toward
combinations boosting transferability.

\parheading{Exhaustive Search}
\label{sec:ultcombo:exhaust}
A straightforward approach to find a set of augmentations that
promote transferability when combined is to exhaustively evaluate all
possible combinations. On the one hand, this approach is guaranteed to
find the combination that advances transferability the most, and
enables us to precisely characterize the relationship between
augmentations and transferability---e.g., whether transferability is
monotonic in the number of augmentations included. However, it would
be infeasible to exhaustively test all %
possible combinations included in the power set of all augmentations
we consider in this work. Hence, as a middle ground, we evaluate
exhaustive search with a representative, well-performing augmentation
method per data-augmentation category.

\parheading{Genetic Search}
\label{sec:ultcombo:gen}
As an efficient alternative for exhaustive search, we employ genetic
search---a commonly used optimization technique suitable for large,
discrete search spaces~\cite{katoch2021review}. Genetic search algorithms seek to
simulate biological evolution processes to find near-optimal
solutions. Instead of evaluating the entire search space, genetic
search begins with an initial population sampled from the search
space. Subsequently, the algorithm selects the fittest individuals
from the population and applies crossover and mutation actions between
them to produce a generation that should score higher on the fitness
objective one aims to optimize. Once the fitness value of the
best-performing individuals converges or a maximum number of iteration is 
reached, the search ends and the best performing individual is
returned.

In this work, we seek to search over a population of
augmentation combinations to maximize transferability (i.e., the
fitness function). To this end, starting with a population of
containing several combinations, we evaluate the transferability rate
achievable via each combination. Next, to produce the next generation,
we select the fittest combinations with the highest transferability
rates at random, with replacement, to act as ancestors, while
maintaining the population size.
The probability for selecting each combination is equal to the ratio
between its fitness and the total fitness of all combinations in the
population. Then, for each selected combination, with probability
$p_\mathit{cross}$ we perform crossover with another combination
chosen at random from combinations that passed selection. If no
crossover occurs, we pass the combination as is to the next
generation after performing mutation.
In crossover, we select (resp. exclude) an
augmentation if it is included (resp.\ excluded) in both ancestors, and
select whether to include the augmentation at random when it is
included in one ancestor but excluded in the other. Subsequently, we
perform mutations, removing (resp.\ inserting) and augmentation that
is included (resp.\ removed) with probability $p_\mathit{mutate}$. We
stop the process after $n_\mathit{gen}$ search iterations.

As exhaustive search demonstrated a roughly monotonic relationship
between augmentations and transferability
(\secref{sec:res:monotonic}), we biased the genetic search
toward including augmentations to speed up convergence to
highly-performing combinations. More precisely, we selected an initial
population with probability of $p_\mathit{aug}>$50\% to include each
augmentation in a combination. Thus, augmentation combinations
in the initial and subsequent populations had higher likelihood to
include augmentations than exclude them.

Genetic search is not guaranteed to converge to the optimal
combinations as long as we have not covered the entire search space.
Still, we find that, for small search spaces, genetic search quickly
discovers combinations that attain roughly as high transferability rates
as the best combination found by exhaustive search (\secref{sec:res:bestcombo}).
After identifying the genetic-search
hyperparameters that work well for small search spaces, we apply
genetic search to the large search space containing all augmentations.
Doing so enables us to identify a combination of augmentations 
attaining state-of-the-art transferability rates when composed
(\secref{sec:res:gensearch}).

\section{Theoretical Analysis of Augmentations}
\label{sec:theory}

Before turning to empirical evaluations, we theoretically
analyze why augmentations can boost transferability.
Specifically, we explore how augmentations affect the gradients
attacks leverage to improve our understanding of how they help
\AE{}s generalize from surrogates to targets. Our analysis
shows that certain augmentations smoothen the model gradients
(i.e., lead gradients to be more consistent across attack
iterations). Intuitively, in return, this may limit the effect of
surrogate peculiarities (e.g., surrogate-specific %
changes in the classification boundaries) on the attack, and increase
\AE{}s' generalization. 
By contrast, when gradients are
sensitive to surrogates' peculiarities, they are likely to change
drastically within small regions of input domains (e.g., across
attack iterations), hindering transferability to target models. 
Our experiments (\secref{sec:res:theorysupport}) empirically support the
theoretical analysis and intuition. %

Adapting proof techniques from randomized
smoothing~\cite{cohen2019certified, salman2019provably}, we
show that Gaussian noise helps smoothen
gradients. Subsequently, we extend the result to a more
general set of augmentation methods, showing that gradients
can be smoothened by augmenting attacks with additive noise sampled
from smooth distributions---i.e., distributions that do not exhibit
sharp changes in their density within small regions. %

\parheading{Gaussian-Noise Augmentations}
\label{sec:theory:gauss}
Prior work has shown that augmenting inference with Gaussian noise, a
technique known as randomized smoothing, leads to estimating a smooth
function with a bounded Lipschitz constant (i.e., maximum rate of
change in the region the function is
defined)~\cite{cohen2019certified, salman2019provably}. Differently
from prior work, instead of augmenting inference, ours augments the
derivative (i.e., gradient estimates). We demonstrate that when using
Gaussian noise for data augmentation, the derivative becomes
smooth. More formally, we show that the derivative's Lipschitz
constant becomes bounded. Moreover, we demonstrate that the
smoother, less peaky, the Gaussian distribution becomes, the smaller is
the derivative's Lipschitz constant and the smoother is the
derivative.

For a sample and label pair $(x,y)$, we denote the loss derivative as
$f(x)=\nabla_x J\left(x, y, \theta\right)$.
Let $I$
be the Lipschitz constant of $J\left(x, y, \theta\right)$
(i.e., the surrogate's loss). Note that $I$ is typically bounded for
\dnn{}s and can be approximated via  numerical analysis
techniques~\cite{virmaux2018lipschitz}.
Our first theoretical result shows that $\hat{f}$, a variant of $f$
augmented with noise sampled from a zero-mean Gaussian distribution
with an identity covariance matrix $E$ (i.e., noise drawn from
$\mathcal{N}(0, E)$), is smooth, with a Lipschitz constant of
$I\cdot\sqrt{\frac{2}{\pi}}$.

\begin{theorem}
 \label{theoA}
 $\hat{f}$, attained by augmenting
 $\nabla_x J\left(x, y, \theta\right)$
  with noise drawn from $\mathcal{N}(0, E)$,  is
  $I\cdot\frac{\sqrt{2}}{\sqrt{\pi}}$-Lipschitz, where $I$
  is the Lipschitz constant of $J\left(x, y, \theta\right)$.
\end{theorem}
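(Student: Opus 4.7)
I interpret $\hat{f}$ as the expectation of the loss gradient under Gaussian perturbation, i.e.\ $\hat{f}(x) = \mathbb{E}_{\delta \sim \mathcal{N}(0,E)}[\nabla_x J(x+\delta, y, \theta)]$, which is the natural object produced by the averaging step (line 6) in \algoref{alg:dataaug} when $D(\cdot)$ adds isotropic Gaussian noise. To show $\hat{f}$ is $I\sqrt{2/\pi}$-Lipschitz, I will bound the operator norm of its Jacobian $\nabla \hat{f}(x)$ uniformly in $x$ and then invoke the mean value inequality.

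The plan is to first rewrite the Jacobian in a form that exposes the Lipschitz constant of $J$. Writing the expectation as an integral against the Gaussian density $\phi(\delta) = (2\pi)^{-d/2} \exp(-\tfrac{1}{2}\|\delta\|^2)$ and differentiating under the integral (justified by dominated convergence, since $J$ has bounded gradient and $\phi$ has moments of all orders), I would apply Stein's identity using the fact that $\nabla_\delta \log \phi(\delta) = -\delta$. Integration by parts then gives
\[
\nabla \hat{f}(x) \;=\; \mathbb{E}_{\delta \sim \mathcal{N}(0,E)}\!\bigl[\nabla_x J(x+\delta, y, \theta)\,\delta^{\top}\bigr].
\]
Equivalently, one can differentiate the smoothed loss $\tilde{J}(x) = \mathbb{E}_\delta[J(x+\delta,y,\theta)]$ twice and apply Stein's lemma to the inner Gaussian density; both routes yield the same identity.

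Next, I would bound the operator norm of this Jacobian. For any unit vector $v$, by Jensen and Cauchy--Schwarz,
\[
\bigl\|\nabla \hat{f}(x)\,v\bigr\| \;\le\; \mathbb{E}_\delta\!\bigl[\|\nabla_x J(x+\delta,y,\theta)\|\cdot |\delta^{\top} v|\bigr].
\]
Since $J$ is $I$-Lipschitz, $\|\nabla_x J(x+\delta,y,\theta)\| \le I$ almost surely, so the right-hand side is at most $I\cdot \mathbb{E}_\delta[|\delta^{\top} v|]$. Because $\delta \sim \mathcal{N}(0,E)$ and $\|v\|=1$, the scalar $\delta^{\top} v$ is distributed as $\mathcal{N}(0,1)$, and a standard calculation gives $\mathbb{E}|Z| = \sqrt{2/\pi}$. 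Taking the supremum over $v$ yields $\|\nabla \hat{f}(x)\|_{\mathrm{op}} \le I\sqrt{2/\pi}$ uniformly in $x$, and the Lipschitz bound on $\hat{f}$ follows by integrating along line segments.

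The main obstacle is the technical justification of differentiation under the integral and the Stein identity step, which requires enough regularity on $J$ (differentiability almost everywhere together with the $I$-Lipschitz bound) to apply dominated convergence; for ReLU-type networks this is handled by noting that the non-differentiability set has Lebesgue measure zero and the Gaussian measure is absolutely continuous. A secondary, purely notational, subtlety is that the identity $\nabla \hat{f}(x) = \mathbb{E}_\delta[\nabla J(x+\delta)\delta^{\top}]$ is not manifestly symmetric, but it need not be---we only need its operator norm, and the argument above bounds it directly. The resulting bound $I\sqrt{2/\pi}$ matches the claim, and because it scales with $\mathbb{E}|\delta^{\top} v|$, it also sets up the subsequent extension where broader ``smooth'' noise distributions give analogous, distribution-dependent Lipschitz constants.
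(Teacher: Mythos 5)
Your proposal is correct and follows essentially the same route as the paper: the paper writes $\hat{f}$ as the Weierstrass transform (convolution of $f=\nabla_x J$ with the Gaussian kernel), differentiates the kernel to obtain $\nabla\hat{f}(x)=\frac{1}{(2\pi)^{n/2}}\int f(t)(x-t)\exp\left(-\tfrac{1}{2}\|x-t\|^2\right)dt$ --- which is exactly your Stein-identity step after the change of variables $t=x+\delta$ --- and then bounds the directional derivative using $|f(t)|\le I$ and the one-dimensional absolute Gaussian moment $\frac{1}{\sqrt{2\pi}}\int|s|e^{-s^2/2}\,ds=\sqrt{2/\pi}$, matching your $\mathbb{E}|Z|$ computation. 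Your added care about differentiation under the integral and the operator-norm formulation is a welcome tightening of details the paper leaves implicit, but it is not a different argument.
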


\begin{proof}
  Augmenting $f$ with Gaussian noise, drawn from $\mathcal{N}(0, E)$,
  results in the so-called Weierstrass transform of $f$~\cite{salman2019provably}:
  
$$\begin{gathered}
\hat{f}(x)=(f * \mathcal{N}(0, E))(x) \\
=\frac{1}{(2 \pi)^{n / 2}} \int_{{R}^n} f(t) \exp \left(-\frac{1}{2}\|x-t\|^2\right) d t \\
=\int_{{R}^n} f(t) \frac{1}{(2 \pi)^{n / 2}} \exp \left(-\frac{1}{2}\|x-t\|^2\right) d t \\
=\int_{{R}^n} \nabla_x\left(J\left(t, y, \theta\right)\right) \frac{1}{(2 \pi)^{n / 2}} \exp \left(-\frac{1}{2}\|x-t\|^2\right) d t
\end{gathered}$$
  where $t$ is obtained after adding noise to $x$.

It suffices to show %
$u\cdot \nabla \hat{f}(x)\le I\cdot\sqrt{\frac{2}{\pi}}$.
holds for any unit vector $u$.
Note that:
$$\begin{gathered}
\nabla \hat{f}(x)  =
\frac{1}{(2 \pi)^{n / 2}} \int_{{R}^n} f(t)(x-t)
\exp \left(-\frac{1}{2}\|x-t\|^2\right) d t.
\end{gathered}$$    
Thus, using $|f(t)|\le I$, we get:
$$\begin{gathered}
u \!\cdot\! \nabla \hat{f}(x) \!\leq \! \frac{1}{(2 \pi)^{n / 2}}\! \int_{{R}^n} \! \!\!\!
\mid \! \! I  \! \cdot  \! u \! \cdot\! (x-t) \mid \exp \left(-\frac{1}{2}\|x-t\|^2\right) d t \\
 =\frac{I}{\sqrt{2 \pi}} \int_{-\infty}^{+\infty}|s| \exp \left(-\frac{1}{2} s^2\right) d s=I\cdot\sqrt{\frac{2}{\pi}} .
\end{gathered}$$
\end{proof}

This theorem shows
that augmenting noise sampled from $\mathcal{N}(0, E)$ bounds the
gradients' maximum rate of change, making them smooth. The next
theorem generalizes the result, showing that augmenting noise sampled
from an isotropic Gaussian distribution, $\mathcal{N}(0, \sigma{}E)$,
leads to smoother gradients the larger is $\sigma{}$. Said
differently, when the distribution is less peaky, the Lipschitz
constant of the $\hat{f}$, the augmented variant of the derivative,
$f$, becomes smaller.

\begin{theorem}
  \label{theoB}
  $\hat{f}$, attained by augmenting
  $\nabla_x J\left(x, y, \theta\right)$
  with noise drawn from
  isotropic Gaussian distribution, $\mathcal{N}(0, \sigma{}E)$,  is
  $I\cdot\frac{\sqrt{2}}{\sqrt{\pi}{\sigma}^2}$-Lipschitz, where $I$
  is the Lipschitz constant of $J\left(x, y, \theta\right)$.
\end{theorem}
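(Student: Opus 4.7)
The plan is to mirror the argument used for Theorem~\ref{theoA}, carrying the scale parameter $\sigma$ through the calculation. Since Theorem~\ref{theoA} is precisely the case $\sigma=1$, the skeleton of the proof is unchanged and only the $\sigma$-dependent pieces of the Gaussian kernel propagate into the final bound; this also provides a built-in sanity check at the end.

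First I would rewrite $\hat{f}$ as the Weierstrass-style convolution of $f(t)=\nabla_x J(t,y,\theta)$ with the density of $\mathcal{N}(0,\sigma E)$, which differs from the kernel used in Theorem~\ref{theoA} only through the normalizing constant and through the factor of $\sigma$ inside the exponential. I would then differentiate under the integral sign, which is justified by rapid Gaussian decay together with the uniform bound $|f(t)|\le I$ inherited from the $I$-Lipschitzness of $J$. Differentiating the exponential with respect to $x$ pulls out an extra $\sigma$-dependent factor proportional to $(x-t)$, so each coordinate of $\nabla\hat{f}(x)$ becomes an integral of $f(t)$ weighted by this vector against the same isotropic Gaussian.

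Next I would bound the Lipschitz constant by estimating $u\cdot\nabla\hat{f}(x)$ for an arbitrary unit vector $u$. Applying $|f(t)|\le I$ inside the integral and exploiting the isotropy of the kernel to change variables to $s=u\cdot(x-t)$ integrates out the $n-1$ directions perpendicular to $u$ against their one-dimensional Gaussian marginals (which contribute $1$), reducing the problem to a one-dimensional integral of the form $\int_{-\infty}^{+\infty}|s|\exp(-s^{2}/(2\sigma))\,ds$. This integral is elementary. Collecting the $\sigma$-dependent factors coming from the normalizing constant, from the gradient of the exponential, and from the value of the one-dimensional integral yields the claimed bound $I\cdot\frac{\sqrt{2}}{\sqrt{\pi}\sigma^{2}}$; specializing to $\sigma=1$ recovers Theorem~\ref{theoA}.

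I do not expect any genuine obstacle. The calculation is mechanical and the only nontrivial moves---differentiation under the integral sign and the reduction to a one-dimensional integral via the rotational symmetry of the isotropic Gaussian---are standard for Gaussian convolutions. The conceptual content already appears in Theorem~\ref{theoA}; the role of Theorem~\ref{theoB} is to expose the explicit $\sigma$-dependence and thereby formalize the intuition that flatter, less peaky augmentation distributions produce strictly smoother loss gradients.
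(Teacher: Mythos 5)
Your approach is exactly the paper's: write $\hat{f}=f*g_\sigma$ as a Weierstrass-type convolution, differentiate under the integral, bound $|f(t)|\le I$, and collapse to a one-dimensional Gaussian integral by isotropy. That skeleton is sound, and the justification for differentiating under the integral sign is a point the paper glosses over that you state correctly.

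The gap is in the step you call ``mechanical'': carrying the $\sigma$-factors through correctly does \emph{not} yield the claimed constant $I\cdot\frac{\sqrt{2}}{\sqrt{\pi}\sigma^{2}}$. With $\sigma$ the standard deviation (which is what the paper's proof actually uses, writing the kernel as $\exp\bigl(-\tfrac{1}{2}\|x-t\|^2/\sigma^2\bigr)$), the prefactor after differentiation is $\frac{1}{\sqrt{2\pi}\,\sigma^{3}}$ and the one-dimensional integral is $\int_{-\infty}^{+\infty}|s|\exp\bigl(-\tfrac{1}{2}s^{2}/\sigma^{2}\bigr)\,ds=2\sigma^{2}$, giving $I\cdot\frac{\sqrt{2}}{\sqrt{\pi}\,\sigma}$ --- a $1/\sigma$ dependence, not $1/\sigma^{2}$. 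The paper reaches $1/\sigma^{2}$ only via an incorrect antiderivative in its final line ($-\sigma\exp(-\tfrac{1}{2}x/\sigma^{2})$ where $-\sigma^{2}\exp(-\tfrac{1}{2}x/\sigma^{2})$ is correct), so by promising to reproduce the stated bound you are committing to reproducing that arithmetic error. Note also that your one-dimensional integral is written with $\exp(-s^{2}/(2\sigma))$, i.e.\ reading $\mathcal{N}(0,\sigma E)$ as having variance $\sigma$; under that convention the answer scales as $1/\sqrt{\sigma}$, which is further still from $\sigma^{-2}$. You should fix the variance-versus-standard-deviation convention and state the exponent your computation actually produces; the qualitative conclusion (larger $\sigma$ gives a smaller Lipschitz constant, consistent with Theorem~\ref{theoA} at $\sigma=1$) survives either way.
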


\begin{proof}
  It suffices to show %
$u\cdot \nabla \hat{f}(x)\le I\cdot\frac{\sqrt{2}}{\sqrt{\pi}*{\sigma}^2}$
holds for any unit vector $u$.
Note that:
$$\begin{gathered}
\hat{f}(x)=\left(f * \mathcal{N}\left(0, \sigma^2\right)\right)(x)\\
=\frac{1}{(2 \pi)^{n / 2} * \sigma} \int_{R^n} f(t) \exp \left(-\frac{1}{2}\|x-t\|^2 / \sigma^2\right) d t \\
=\int_{R^n} f(t) \frac{1}{(2 \pi)^{n / 2} / \sigma} \exp \left(-\frac{1}{2}\|x-t\|^2 / \sigma^2\right) d t\\
=\int_{R^n} \nabla_x(J(t, y, \theta)) \frac{1}{(2 \pi)^{n / 2} * \sigma} \exp \left(-\frac{1}{2}\|x-t\|^2 / \sigma^2\right) d t.\\
\end{gathered}$$
Thus, if we compute the derivative of $\hat{f}(x)$, we would get:
$$\begin{gathered}
\nabla \hat{f}(x)=
\frac{1}{(2 \pi)^{n / 2} * \sigma^3} \int_{R^n} f(t)(x-t) \exp \left(-\frac{1}{2}\|x-t\|^2 / \sigma^2\right) d t
\end{gathered}$$
Now, using $|f(t)|\le I$, we get:
$$\begin{gathered}
u \cdot \nabla \hat{f}(x) \leq \frac{1}{(2 \pi)^{n / 2} * \sigma^3} \int_{R^n} \!\! |I \cdot u \cdot(x-t)| \exp \left(-\frac{1}{2}\|x-t\|^2 / \sigma^2\right) \! d t \\
=\frac{I}{\sqrt{2 \pi} * \sigma^3} \int_{-\infty}^{+\infty}|s| \exp \left(-\frac{1}{2} s^2 / \sigma^2\right) d s \\
=\frac{2 I}{\sqrt{2 \pi} * \sigma^3} \int_0^{+\infty} s \exp \left(-\frac{1}{2} s^2 / \sigma^2\right) d s \\
=\frac{2 I}{\sqrt{2 \pi} * \sigma^3} \int_0^{+\infty} \frac{1}{2} \exp \left(-\frac{1}{2} s^2 / \sigma^2\right) d s^2 \\
=\frac{2 I}{\sqrt{2 \pi} * \sigma^3} \int_0^{+\infty} \frac{1}{2} \exp \left(-\frac{1}{2} x / \sigma^2\right) d x \quad\left(\text{where}\ x=s^2\right) \\
=\left.\frac{2 I}{\sqrt{2 \pi} * \sigma^3} *\left(-\sigma * \exp \left(-\frac{1}{2} x / \sigma^2\right)\right)\right|_0 ^{\infty}
=I\cdot\frac{\sqrt{2}}{\sqrt{\pi}*{\sigma}^2}.
\end{gathered}$$
\end{proof}

This theorem demonstrates that, when augmenting noise sampled from an 
isotropic Gaussian distribution, we can smoothen the derivative,
decreasing gradients' rate of change, by increasing $\sigma$. In return,
the gradient estimates would become more consistent across attack
iterations. This, we expect, renders the surrogate's peculiarities
less likely to impact \AE{}s, increasing the likelihood of
generalization to the target.

\parheading{Additive Noise from Smooth Distributions}
We extend the theoretical results, %
showing that we can smoothen  $f$ if we augment inputs
with additive noise sampled from a smooth distribution
$g(\cdot)$.
Intuitively, a distribution $g(\cdot)$ is smooth if
it does not exhibit sharp changes in its density within small
regions. 
Formally, for a constant $A$, we define a distribution $g$ as smooth
if $|\int_{{R}^n} \nabla_x g(t-x) d t |\le A$, where $t$ is
obtained by adding noise sampled from $g$ to $x$. For example, for
an isotropic Gaussian distribution, it can be shown that
$A=\frac{\sqrt{2}}{\sqrt{\pi}\sigma^2}$.
Given this definition, we generalize the previous %
theoretical results, and show that the 
Lipschitz constant of $\hat{f}$, obtained by adding noise sampled from
$g$ to $f$, is bounded.

\begin{theorem}
  \label{theoC}
  $\hat{f}$, attained by augmenting
  $\nabla_x J\left(x, y, \theta\right)$
  with noise drawn from a distribution $g$, such that
  $|\int_{{R}^n} \nabla_x g(t-x) d t |\le A$,
  is $I\cdot A$-Lipschitz, where $I$
  is the Lipschitz constant of $J\left(x, y, \theta\right)$.
\end{theorem}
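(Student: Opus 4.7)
The plan is to mirror the structure of the proofs of Theorems~\ref{theoA} and~\ref{theoB}, replacing the explicit Gaussian density with a generic smooth density $g$. First, I would write the noise-augmented gradient estimator as a convolution, $\hat{f}(x) = \int_{\mathbb{R}^n} f(t)\, g(t-x)\, dt$, reflecting that when $t = x + z$ with $z \sim g$, the integration variable $t$ has density $g(t-x)$. Then I would differentiate under the integral sign (justified by dominated convergence whenever $g$ is smooth enough for $\nabla_x$ and $\int$ to be exchanged) to obtain $\nabla \hat{f}(x) = \int_{\mathbb{R}^n} f(t)\, \nabla_x g(t-x)\, dt$.

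As in the earlier proofs, establishing $u \cdot \nabla \hat{f}(x) \le I \cdot A$ for every unit vector $u$ suffices to conclude that $\hat{f}$ is $I \cdot A$-Lipschitz. I would apply $|f(t)| \le I$ (which holds because $I$ is the Lipschitz constant of $J$) to pull the bound outside the integral, then invoke the smoothness hypothesis $\bigl|\int_{\mathbb{R}^n} \nabla_x g(t-x)\, dt\bigr| \le A$ to finish. As a sanity check, specializing $g$ to an isotropic Gaussian with variance $\sigma^2$ should recover $A = \frac{\sqrt{2}}{\sqrt{\pi}\sigma^2}$ from Theorem~\ref{theoB}, so the generalized statement indeed subsumes the Gaussian case.

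The hard part is the step that absorbs $|f(t)| \le I$ into the bound. A literal replay of Theorem~\ref{theoA} yields $I \int |u \cdot \nabla_x g(t-x)|\, dt$, which is in general larger than $\bigl|u \cdot \int \nabla_x g(t-x)\, dt\bigr|$, the quantity controlled by the stated hypothesis. I would reconcile this either by (a) interpreting the smoothness condition as shorthand for a bound on the integrated absolute value, matching the moment integral $\int |s|\, e^{-s^2/(2\sigma^2)}\, ds$ that actually appears in the proof of Theorem~\ref{theoB}, or (b) restricting to symmetric densities and splitting the integration into half-spaces on which $u \cdot \nabla_x g(t-x)$ has constant sign, so the two interpretations coincide. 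With either fix, the argument collapses to a single moment integral analogous to those in the earlier theorems, yielding the desired $I \cdot A$-Lipschitz bound on $\hat{f}$.
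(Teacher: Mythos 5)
Your plan mirrors the paper's proof of Theorem~\ref{theoC} step for step: write $\hat{f}=f*g$, differentiate under the integral sign, pull out $|f(t)|\le I$, and invoke the hypothesis on $g$. The one place you hesitate --- the ``hard part'' of absorbing the bound --- is in fact a genuine defect of the paper's own argument, and your diagnosis is exactly right. The paper's final chain is
\[
u\cdot\nabla\hat f(x)\;\le\;\int_{\mathbb{R}^n} I\cdot\bigl|\nabla_t g(t-x)\bigr|\,dt\;\le\;I\,\Bigl|\int_{\mathbb{R}^n}\nabla_x g(t-x)\,dt\Bigr|\;=\;I\cdot A,
\]
and the second inequality points the wrong way: the triangle inequality gives $\bigl|\int\cdot\bigr|\le\int|\cdot|$, not the reverse. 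Worse, for any density decaying at infinity the vector $\int_{\mathbb{R}^n}\nabla_x g(t-x)\,dt$ is identically zero (substitute $s=t-x$ and integrate each partial derivative over its own coordinate), so the hypothesis as literally stated holds with $A=0$ and would force $\hat f$ to be constant. Your fix (a) --- reading the smoothness condition as a bound on $\int_{\mathbb{R}^n}\bigl|u\cdot\nabla g(s)\bigr|\,ds$ uniformly over unit vectors $u$ (equivalently, on $\int\|\nabla g\|$) --- is the correct repair and is evidently what is intended, since that absolute-moment integral is precisely the quantity computed in the proofs of Theorems~\ref{theoA} and~\ref{theoB}; with it your argument closes cleanly, and fix (b) becomes unnecessary. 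Two minor caveats: under the corrected reading the one-dimensional Gaussian gives $A=\sqrt{2/\pi}\,/\sigma$ rather than the $\sqrt{2}/(\sqrt{\pi}\sigma^2)$ quoted in Theorem~\ref{theoB} (that proof has an arithmetic slip in its last antiderivative), and, as in the paper, $f=\nabla_x J$ is vector-valued, so ``$u\cdot\nabla\hat f$'' should properly be a bound on the operator norm of the Jacobian of $\hat f$; neither affects the substance of your argument.
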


\begin{proof}
  Again, it suffices to show
$u \cdot \nabla \hat{f}(x) \leq I \cdot A.$
holds for any unit-norm vector $u$. Note that:
$$\begin{gathered}
  \hat{f}(x)=f * g(x)=\int_{{R}^n} f(t) g(t-x) d t
\end{gathered}$$
Thus:
$$\begin{gathered}
u \cdot \nabla \hat{f}(x) =\int_{{R}^n}  f(t) \cdot u \cdot \nabla_t g(t-x) d t \\
\leq|\int_{{R}^n}  f(t) \cdot u \cdot \nabla_t g(t-x) d t|
\leq\int_{{R}^n} | f(t) \cdot u| \cdot |\nabla_t g(t-x) |d t\\
\leq \int_{{R}^n} I\cdot |\nabla_t g(t-x) |d t
\leq I |\int_{{R}^n} \nabla_x g(t-x) d t|=I \cdot A
\end{gathered}
$$
\end{proof}

This theorem demonstrates that there exist data augmentations other
than Gaussian noise that can enable us to obtain a smooth
$\hat{f}$. While we cannot ascertain that all augmentations we use
satisfy the pre-conditions necessary for the theorem to hold (i.e.,
smooth distribution $g$), we find that, in practice, the augmentations
we use lead to smoother gradient estimates
(\secref{sec:res:theorysupport}). In return, we also find that these
augmentations help boost transferability.

\section{Experimental Setup}
\label{sec:setup}

Now we present the experimental setup.
Our code is publicly
available~\cite{OurCode}.

\parheading{Data}
We used an \imagenet{}-compatible
dataset~\cite{imagenetcompat} %
and \cifarten{}~\cite{krizhevsky09cifar} for evaluation, per common practice
(e.g.,~\cite{dong2019evading, yang2021trs}).
The former contains 1K 224$\times$224 dimensional images
pertaining to 1K classes, originally collected for the
NeurIPS 2017 adversarial \ml{} competition.
For the latter, we sampled 1K 32$\times$32
dimensional test-set images, roughly balanced between the dataset's
ten classes. As the findings across datasets are consistent,
we present detailed results primarily on \imagenet{}.

\parheading{Models}
We used 18 \dnn{}s to transfer attacks from (as surrogates) and
to (as targets)---six for \cifarten{} and 12 for \imagenet{}.
To facilitate comparison with prior work, we included models
widely used for testing transferability (e.g.,~\cite{Wang21Admix, yang2021trs}).
Furthermore, to ensure our findings are general, we included
models covering varied architectures, including Inception, ResNet, VGG,
DenseNet, MobileNet, ViT, and NASNet.
Of the 12 \imagenet{} models, eight were normally
trained and four were adversarially
trained. Specifically, for normally trained models, we selected: 
Inc-v3~\cite{szegedy2016rethinking};
Inc-v4;
IncRes-v2~\cite{szegedy2017inception});
Res-50;
Res-101;
Res-152~\cite{he2016identity};
MNAS~\cite{Tan_2019_CVPR};
and %
ViT~\cite{50650}.
For adversarially trained models, we
used:
Inc-v3$_\mathit{adv}$~\cite{kurakin2016adversarial};
Inc-v3$_\mathit{ens3}$;
Inc-v3$_\mathit{ens4}$;
and %
IncRes-v2$_{ens}$~\cite{tramer2017ensemble}. 
All six \cifarten{} \dnn{}s were normally
trained. For this dataset, we used pre-trained %
VGG~\cite{vgg};
Res~\cite{he2016identity};
DenseNet~\cite{huang2017densely};
MobileNet~\cite{sandler2018mobilenetv2};
GoogleNet~\cite{szegedy2015going};
and
Inc~\cite{szegedy2016rethinking}.
We obtained the models' PyTorch implementations and
weights from public GitHub
repositories~\cite{cifarmodels, modelrepo, torchvision}.

\parheading{Attacks}
We tested standard attack configurations
and validated findings with varied perturbation norms,
similar to prior work~\cite{Wang21Admix, yang2021trs}. Namely, we
evaluated untargeted \mifgsm{}-based attacks, bounded 
in \lpnorm{\infty}-norm. For \imagenet{}, unless stated otherwise, we tested
$\epsilon=\frac{16}{255}$, but also experimented with
$\epsilon\in\{\frac{8}{255}, \frac{24}{255}\}$ (\appref{app:imagenet}). For \cifarten{}, 
we experimented with $\epsilon\in\{0.02, 0.04\}$.
We quantified attack success
via \emph{transferability rates}---the percentages of
attempts at which \AE{}s created against surrogates
were misclassified by targets.
As baselines, we used four state-of-the-art transferability-based 
attacks: \admixdt{}, \dst{}, \sdtv{}, and \undp{} (see \secref{sec:dataaug:prev}).
\appref{app:augparams} reports the parameters
used in attacks and augmentation methods.
Besides the four state-of-the-art baselines we tested, 
we considered including other recent attacks in the
evaluation~\cite{huang2019enhancing, Wu21AdvTforms, Zhang23PAM}.
However, these either lacked publicly available
implementations~\cite{Wu21AdvTforms, Zhang23PAM}, or achieved
uncompetitive transferability rates in our %
experiments~\cite{huang2019enhancing}.

\section{Experimental Results}
\label{sec:results}

We start by evaluating
individual augmentation methods and standard combinations with
scaling, diverse inputs, and translations
(\secref{sec:res:singleaugs}). We then turn to analyzing \emph{all possible}
compositions between a representative subset of augmentation types via
exhaustive search to assess whether
transferability typically improves when considering more
augmentations (\secref{sec:res:monotonic}). %
Next, we present the results of genetic search (\secref{sec:res:gensearch}).
In \secref{sec:res:bestcombo}, we report evaluate
\bestcombo{}---the best combination found by exhaustive 
search---and \bestcomboadv{}---the best combination found by genetic
search---including on defended models, and provide comparisons with
the baselines attacks. We then compare parallel and serial
composition, showing the former improves transferability
(\secref{sec:res:ParVsSer}), before turning to
attack run-times (\secref{sec:time}).
Last, we provide empirical support to the
theoretical results presented in \secref{sec:theory}
(\secref{sec:res:theorysupport}).

\subsection{Individual Augmentations}
\label{sec:res:singleaugs}

\begin{table*}[t!]
\centering
\caption{\label{tab:strongsingle}Avg.\ transferability (\%)
  on \imagenet{} from Inc-v3 to
  all other models when integrating individual
  augmentations composed with \dstonly{}.
  A vertical line separates the baselines.
  from our attacks.
}

\begin{tabular}{r|rrrrrrr}
\toprule
  \textbf{Attack}
  &\ASDTM{} &
  \dst{}&
  \sdtv{}&
  \multicolumn{1}{r|}{\undp{}}&
  \chanshuffle{}-\dstonly{} &
  \colorjitter{}-\dstonly{}&
  \fancyPCA{}-\dstonly{} \\

  \textbf{Avg.}
  &80.6
  &81.8
  &\textbf{88.8}
  &\multicolumn{1}{r|}{79.8}
  &84.8
  &85.9
  &{83.5} \\

  \midrule
  \textbf{Attack (cont.)} &
  \gsdt{} &
  \randerase{}-\dstonly{}&
  \cutmix{}-\dstonly{}&
  \cutout{}-\dstonly{}&
  \neuraltrans{}-\dstonly{}&
  \sharpen{}-\dstonly{}&
  \autoaug{}-\dstonly{}\\

  \textbf{Avg.\ (cont.)}
  &\textit{87.0}
  &84.9
  &54.0
  &84.1
  &73.5
  &80.1
  &82.9\\
  
\bottomrule
\end{tabular}
\end{table*}

Initially, we evaluated transferability integrating a single 
augmentation at a time in attacks, or when composing individual
augmentations with leading augmentations
proposed in prior
work~\cite{lin2019nesterov, Wang21Admix,li2023towards}---namely,
diverse inputs,
scaling, and translation (\dstonly{}). To this end, we selected ten 
augmentations to represent augmentations categories
(\tabref{tab:strongsingle}). We found that considering each of the ten
augmentations individually does \emph{not} lead to competitive
performance with the baselines.
However, composing individual augmentations with \dstonly{} enhanced
transferability markedly.
\tabref{tab:strongsingle} shows transferability rates on \imagenet{}.
Surprisingly, simple augmentations in color-space
fared particularly
well, outperforming most baselines and all advanced augmentation methods
(e.g., \autoaug{}) in most cases. Particularly, 
Composing \greyscale{} with \dstonly{} 
(\gdst{} attack) performed best in this setting,
outperforming all baselines but \sdtv{}. \gsdt{} attained strong
results when target models were either normally or adversarially
trained (\twotabrefs{tab:normal}{tab:ens2adv}),
with different $\epsilon$s, and on \cifarten{}.

\subsection{Exhaustive Search}
\label{sec:results:Mono}
\label{sec:res:monotonic}

We wanted to evaluate whether transferability is monotonic in the
number of augmentation types---i.e., whether composing more
techniques increases or, at least, preserves
transferability. To this end, we ran exhaustive search
(\secref{sec:ultcombo:exhaust}) over augmentations covering all
categories and assessed the effect of composing more augmentations on
transferability. Note, however, that running exhaustive search with
the 46 augmentations we consider as well as \dst{} and \admix{} (i.e.,
a total of 48 augmentation methods) would be prohibitive. Thus, we
selected the best performing augmentation method of each of the seven
categories (\tabref{tab:strongsingle})
and \dst{}, and
evaluated all 2\textsuperscript{7} (=128) possible compositions
(per~\secref{sec:dataaug:compose}). Specifically, we tested every
possible combination of \greyscale{}, \cutout{}, \sharpen{}, \neuraltrans{},
\autoaug{}, \admix{}, and \dst{}. Given a composition, we produced
\AE{}s against the Inc-v3 \imagenet{} \dnn{}, %
and computed the expected transferability rate against remaining
\imagenet{} \dnn{}s. %
Then, for every pair of attacks differing only in whether a
single augmentation method was incorporated in the composition, we
tested whether adding the augmentation method improved
transferability.

The results showed a mostly monotonic relationship between
transferability and augmentations. Except for \neuraltrans{} and
\sharpen{}, which sometimes harmed transferability, %
composing more augmentation methods increased or 
preserved transferability. I.e., with a few exceptions,
augmentation  compositions containing a superset of augmentations
compared to other compositions  
typically had larger or equal transferability.
Notably, comparing all compositions enabled us to find that a composition of
all seven augmentation methods except for \neuraltrans{} attained the best
transferability. We call this composition \bestcombo{}.

\subsection{Genetic Search}
\label{sec:res:gensearch}

While we considered a more comprehensive set of augmentations than
prior work to construct \bestcombo{}, this set was still relatively
restricted. To discover combinations that advance transferability
further while taking all augmentations into account, we ran genetic
search (\secref{sec:ultcombo:gen}). Initially, to set the genetic
search hyperparameters and find how close to optimal are the
combinations found by genetic search, we ran it on the set of seven
augmentations tested with exhaustive search. Specifically, we varied
the number of generations ($n_\mathit{gen}$) and population sizes in genetic search and
measured \emph{(1)} the fraction of the search space that would be
covered by genetic search compared to exhaustive search; and
\emph{(2)} the ratio between the average transferability rates
attained by the fittest combination found by genetic search and the
optimal combination included in \bestcombo{}. Our aim was to find
whether genetic search is capable of discovering augmentation
combinations that achieve nearly as high transferability as exhaustive
search (i.e., maximizing the second metric) while searching a small
fraction of the search space (i.e., minimizing the second metric). We
repeated the experiment 100 times, using Inc-v3 as a surrogate and
the remaining \imagenet{} models as targets. Because we tested all
augmentation combinations in exhaustive search, we did not need to
reproduce \AE{}s when running genetic search. After
hyperparameter sweep, we set $p_\mathit{cross}$=60\%,
$p_\mathit{mutate}$=10\%, and $p_\mathit{aug}$=55\% for best performance.

The results showed that with as litts as $n_\mathit{gen}$=2 and
population size of 20, it is possible to find compositions attaining
$>$99\% of \bestcombo{}'s transferability rates.
Accordingly, using these hyperparameters, we ran genetic search on all 46
augmentations we considered for the first time to boost
transferability, as well as \dst{} and \admix{}, for a total of 48
augmentation techniques. Again, we selected Inc-v3 as a surrogate and
the remaining \imagenet{} models as targets. The search process found
a combination of 33 augmentations that achieved the highest
transferability
(see \appref{app:bestcomboadv-augs} for the complete list).
We refer to the resulting attack composing all 33 combinations by
\bestcomboadv{}.

\subsection{The Most Effective Combinations}
\label{sec:res:bestcombo}

We evaluated the transferability of \AE{}s produced by composing the
augmentations found via exhaustive and genetic search (\bestcombo{} and
\bestcomboadv{}, respectively) extensively, testing
transferability to normally and adversarially trained
\dnn{}s, various defenses, and commercial systems. Additionally, we
also explored the role of the number of augmented images (sample size)
and augmentation methods for boosting transferability.

\parheading{Normally and Adversarially Trained Targets}
The \bestcombo{} obtained higher transferability to normally trained
models than the baselines (89.6\% vs.\ $\le$85.0\%
avg.\ transferability; %
\tabref{tab:normal}). This held across different values
of $\epsilon$  (\tabref{tab:app:imageneteps} in \appref{app:imagenet}),
and on the \cifarten{}
dataset with different architectures.
\bestcomboadv{}, composing more augmentations, reached even higher
transferability rates, with 92.6\%
average transferability rate %
to %
normally trained models on \imagenet{}. %

\begin{table*}[t!]
    \caption{\label{tab:normal} Average transferability (\%) of black-box attacks on 
    \imagenet{}, from all surrogates %
    to normally and adversarially trained targets. A 
    vertical line separates the baselines from our attacks.}
  \centerline{%
  \begin{tabular}{l |rrrr|rrr}
    \toprule
    \textbf{Targets}  & \textbf{\ASDTM{} }& \textbf{\SDTM{}} & \textbf{\sdtv{}} & \textbf{\undp{}} & \textbf{\gsdt{}}&\textbf{\bestcombo{}} &  \textbf{\advancedultimatecombo{}} \\
    \midrule
      \textbf{Normally trained} &79.7  & 82.1 & 82.9 & 85.0 & 86.2 & \textit{89.6} & \textbf{92.6} \\
      \textbf{Adversarially trained} &78.2  & 77.4 & 82.5 & 65.4 & 83.1 & \textit{85.3} & \textbf{91.8}\\
               
    \bottomrule
  \end{tabular}
  } %

\end{table*}

\begin{table*}[!t]
\centering
\caption{\label{tab:ens2adv}Transferability (\%) on \imagenet{},
  from an ensemble of normally trained surrogates (incl.\ Inc-v4,
  Res-50, Res-101 and Res-152) to adversarially trained targets. A 
    vertical line separates the baselines from our attacks. 
} 
\centerline{%
\begin{tabular}{l | r r r r | r r r}
  \toprule
  \textbf{Model} & \textbf{\ASDTM{}}& \textbf{\SDTM{}} &
    \textbf{\sdtv{}}& \textbf{\undp{}}& \textbf{\gsdt{}}& 
    \textbf{\bestcombo{}}& \textbf{\advancedultimatecombo{}} \\ \midrule
 
  \textbf{Inc-v3$_{adv}$} & 89.3 & 89.1 & 92.8 & 88.2 & 92.6 & \textit{93.2} & \textbf{96.2}\\
  \textbf{Inc-v3$_{ens3}$} & 90.0 & 89.6  & 93.8 & 90.6 & 93.5 & \textit{95.4} & \textbf{96.3} \\
  \textbf{Inc-v3$_{ens4}$} & 89.0 & 87.8 & 93.1 & 86.1 & 92.3 & \textit{93.4} & \textbf{96.4} \\
  \textbf{IncRes-v2$_{ens}$} & 84.8 & 83.4 & 90.1 & 75.5 & 88.6 & \textit{91.1} & \textbf{94.6} \\
  
\bottomrule
\end{tabular}
}%
\end{table*}

Furthermore, \bestcombo{} and \bestcomboadv{} achieved the highest
performance also when transferring attacks to
adversarially trained \dnn{}s (\tabref{tab:normal}; 85.3\% and
91.8\% avg.\ transferability for \bestcombo{} and
\bestcomboadv{}, respectively, compared to $\le$82.5\%
avg.\ transferability for the baselines). Transferring \AE{}s crafted
using an ensemble of models increased transferability further
(\tabref{tab:ens2adv}; respectively, 93.4\% and 95.7\%
avg.\ transferability for \bestcombo{} and \bestcomboadv{}). Per a
paired t-test, the differences between 
\bestcombo{} and \bestcomboadv{} compared to the baselines over all
pairs of surrogates and targets considered were statistically
significant ($p$-value$<$0.01).

\begin{table*}[tp!]
\centering
\caption{\label{tab:defs}
  Transferability (\%) from an ensemble of normally trained
  surrogates (Inc-v4, Res-50, Res-101 and Res-152) to
  defended \imagenet{} models.
}
\centerline{%
\begin{tabular}{l | r r r r | r  r }
  \toprule

\textbf{Defense} & \textbf{\ASDTM{}} & \textbf{\dst{}} & \textbf{\sdtv{}} & \textbf{\undp{}} & \textbf{\bestcombo{}} & \textbf{\advancedultimatecombo{}} \\ \midrule
\textbf{\bitred{}} & 88.6 & 88.2 & 94.8 & 94.9 & \textbf{96.0} & \textit{95.5} \\
\textbf{\nrp{}} & 51.0 & 54.9 & \textbf{80.0} & 27.9 & \textit{65.3} & 55.8 \\
\textbf{\randsmooth{}} & 87.3 & 84.8 & 90.6 & 85.5 & \textit{88.5} & \textbf{95.6} \\
\textbf{\arandsmooth{}} & 65.4 & 62.9  & 66.5 & 61.9 & \textit{67.0} & \textbf{71.9} \\

\bottomrule
\end{tabular}
} %

\end{table*}

\begin{table*}[t!]

\caption{\label{tab:cifarTRS}Transferability (\%) on \cifarten{}
  from a normally trained VGG surrogate to an ensemble of Res
  \dnn{}s trained via \TRS{}.
}
 \centerline{
\begin{tabular}{l|r r r r | r r r }
  \toprule
     \textbf{Epsilon} &\textbf{\ASDTM{}} & \textbf{\SDTM{}} & \textbf{\sdtv{}} & \textbf{\undp{}} & \textbf{\gsdt{}} & \textbf{\bestcombo{}}& \textbf{\advancedultimatecombo{}} \\ \midrule
   0.02 & 23.7 &  26.6 & 22.1&24.9 & {28.2} & \textit{32.7} & \textbf{46.5} \\
   0.04 & 45.8  & 46.7 & 41.9 &52.6 &{52.8} & \textit{59.4} & \textbf{80.0} \\
\bottomrule
\end{tabular}
}

\end{table*}

\parheading{Additional Defenses}
Besides adversarially trained models, we evaluated
transferability against five standard %
defenses. Two defenses, bit reduction (\bitred{})~\cite{xu2018feature}
and neural representation purification
(\nrp{})~\cite{naseer2020self}, seek to sanitize
adversarial perturbations. Two others, randomized smoothing
(\randsmooth{})~\cite{cohen2019certified} and randomized smoothing with
adversarial training (\arandsmooth{})~\cite{salman2019provably} offer
provable robustness guarantees. Last, \TRS{} leverages an ensemble of
smooth \dnn{}s trained to have misaligned gradients, to defend
attacks~\cite{yang2021trs}. We evaluated all defenses but \TRS{} on
\imagenet{}. We used the defenses with default parameters, 
and transferred \AE{}s
crafted against an ensemble of normally trained models. The results
are shown in \tabref{tab:defs}. Except for \nrp{}, where \sdtv{}
attained the highest performance, %
\bestcomboadv{} outperformed the baselines by large
margins. %
Following~\cite{yang2021trs}, we tested
\TRS{} on \cifarten{} with adversarial perturbation norms
$\epsilon\in\{0.02, 0.04\}$.
\ultimatecombo{}'s variants did best against this
defense as well (\tabref{tab:cifarTRS}).

\begin{table*}[t!]

\caption{\label{tab:googleAPI}Transferability (\%) on \imagenet{}
  from an esemble of normally trained surrogates (Inc-v4, Res-50,
  Res-101 and Res-152) to Google Cloud Vision.}
\centerline{ %
\begin{tabular}{r r r r | r r}
  \toprule
     \textbf{\ASDTM{}} & \textbf{\SDTM{}} & \textbf{\sdtv{}} & \textbf{\undp{}}& \textbf{\bestcombo{}}&\textbf{\bestcomboadv{}} \\ \midrule
   76.2 & 73.4 &  72.6 & \textit{81.3} & {76.5} & \textbf{82.1} \\
\bottomrule
\end{tabular}
} %
\end{table*}

\parheading{Commercial System}
Last, we  tested attacks against Google Cloud Vision to
simulate an even more realistic setting where target models are not
publicly available and have been trained on a distinct training set
compared to the surrogate. To estimate transferability, we
classified the benign \imagenet{} images via Google's API and
calculated the ratio of \AE{}s that were classified differently than
their benign counterparts. \tabref{tab:googleAPI} presents the
result. %
\bestcomboadv{} outperformed \emph{all} baselines, with 0.8\%--9.5\% higher
transferability rates.

\subsection{Parallel vs.\ Serial Composition}
\label{sec:res:ParVsSer}
\label{sec:ParVsSer}

To corroborate that parallel composition is useful for transferability, we compared it to 
the previously established serial composition of the augmentation methods. In this 
experiment, we tested the augmentations included in \ultimatecombo{} and measured
transferability on \imagenet{}, with Inc-v3 as surrogate and all other models as targets. 
The results showed that serial composition led to markedly lower transferability than 
parallel composition---26.1\% vs. 88.9\% average transferability rates.
A potential explanation is that serial composition deteriorates image quality, 
significantly dropping the surrogate's benign accuracy (e.g., 21.0\% vs. 82.4\% benign 
accuracy on Inc-v3 on augmented images when using serial and parallel composition, 
respectively), such that the adversarial directions produced do not generalize to the 
target models.%

\subsection{Attack Run-Times}
\label{sec:time}

\mifgsm{}'s time complexity is predominated by the gradient
computation steps. Accordingly, the attacks' run times are directly
affected by the number of samples 
the augmentation methods create (i.e., samples emitted by $D(\cdot)$ in
\algoref{alg:dataaug}): The more samples emitted by the augmentation
method, the more time would be spent computing
gradients for updating the \AE{}s in each iteration,
thus increasing \AE{}s' generation time. The empirical measurements
corroborate this intuition (\tabref{tab:timing}). 
\dstonly{} augments \mifgsm{} with the least samples (except for
\undp{} that runs for 100 iterations) and runs for 10 iterations, 
leading to the fastest attack (\dst{}). \gsdt{} is the second fastest
attack, while \bestcombo{} is slower than \admixdt{} but substantially
faster ($\times$2.44) than \sdtv{}. \bestcomboadv{} augments attacks
with the 
largest number of samples, thus increasing run time the
most. %
We note that no particular effort was
invested to make \gsdt{}, \bestcombo{}, and \bestcomboadv{} more
time-efficient (e.g., stacking augmented 
samples for parallel computation, similarly to \admixdt{}). Moreover,
since transferability-based attacks generate \AE{}s offline, and only
once per surrogate model,
attack run-time is a marginal consideration for selecting an
attack compared to transferability. %
Lastly, for improved time efficiency, an adversary with strict time
requirements may consider using a subset of samples augmented by
\bestcomboadv{}'s. We found that when sampling 30 of the 165 augmented
images (\subsetattack{30} in \tabref{tab:timing}), we obtain an attack
$\times{}7.27$ faster than \bestcomboadv{} with roughly the
same transferability. By contrast, increasing the number of samples in
baseline attacks only slows them down with barely no increase
in transferability. %

\begin{table*}[t!]
\setlength\tabcolsep{3pt}
\small
\centering
\caption{\label{tab:timing}The number of samples augmented and the
  avg.\ time of crafting an \AE{} %
  per attack. Times were measured on an Nvidia A5000 GPU,
  on \imagenet{}, when attacking an
  Inc-v3, and averaged for 1K samples. For best
  transferability rates, \undp{} %
  was run
  for 100 iterations, while other attacks were run for 10.
}
\centerline{ %
\begin{tabular}{l | r r r r | r r r r}
  \toprule
  \textbf{Attack} & \textbf{\ASDTM{}} & \textbf{\SDTM{}} &
  \textbf{\sdtv{}} & \textbf{\undp{}} & \textbf{\gsdt} &
  \textbf{\bestcombo} & \textbf{\subsetattack{30}} & \textbf{\advancedultimatecombo{}} \\ \midrule
  \textbf{Augmented samples} & 15 & 5 & 105 & 1 & 10 & 30 & 30 & 165 \\
  \textbf{Time (s)} & 1.68 & \textbf{0.72} & 11.29 & 1.65 & \textit{1.10} &
  3.62 & 5.54 & 40.27 \\

  \bottomrule
\end{tabular}} %

\end{table*}

\subsection{Supporting Theory}
\label{sec:res:theorysupport}

\begin{figure}[t!]
    \centering
    \includegraphics[width=0.5\textwidth]{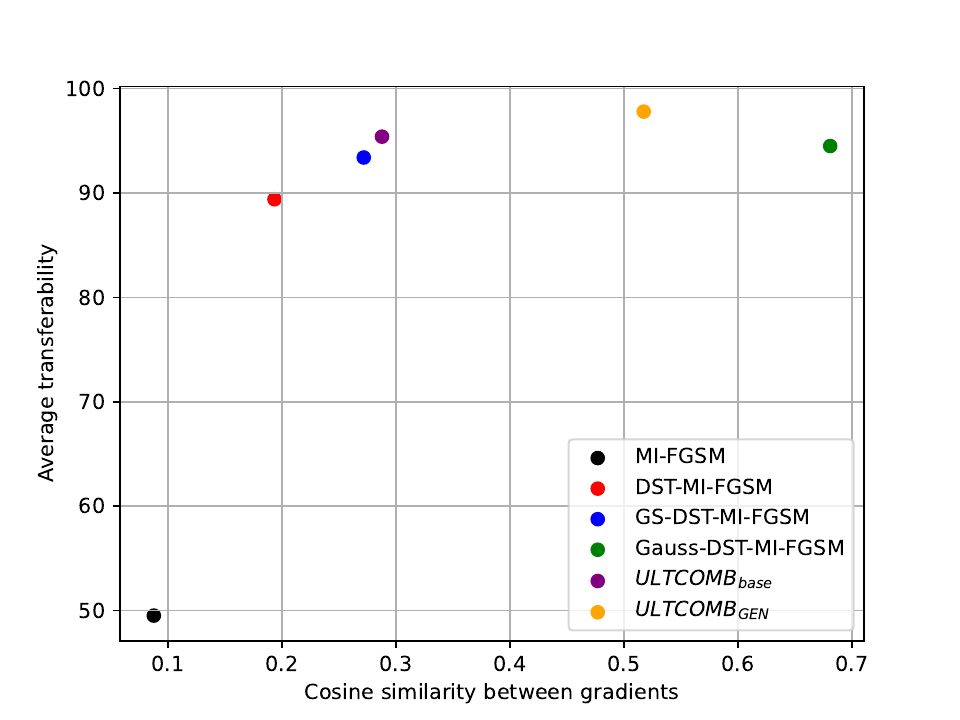}
    \caption{\label{fig:cosinesimilarity}The relationship between
      the similarity between consecutive gradients computed in attacks
      and transferability rates. Results obtained using Inc-v3 as a
      surrogate and the remaining \imagenet{} models as targets.
      Gauss-\dst{} refers attacks composing Gaussian noise and
      \dst{} and uses the  same number of augmented images as \bestcomboadv{}. Notice how \bestcombo{} and \bestcomboadv{} lead to
      higher similarity between gradients than most attacks, and how
      the transferability rates tend to increase as
      the gradient similarity increases.
    }
\end{figure}

Our theoretical results show that augmentations often smoothen the
gradients computed by the surrogate, thus decreasing the effect of
the surrogate's peculiarities on \AE{}s. In turn, this is likely to
lead to improve the generalization of \AE{}s from the surrogate to
target models. Our empirical findings support this
theory. Particularly, using the Inc-v3 models as a surrogate, we
tested the (cosine) similarity between gradients throughout attack
iterations when integrating different augmentations into
attacks. %
Additionally, we measured the avg.\ transferability of attacks to the
remaining \imagenet{} models to assess the relationship between
smoothness and transferability. We expected that Gaussian-noise
augmentations as well as compositions of multiple augmentations (as in
\bestcombo{} and \bestcomboadv{}) would lead to smoother gradients,
with higher cosine similarities across attack iterations. Moreover, we
anticipated that transferability would be higher when the gradients
are smoother. As shown in \figref{fig:cosinesimilarity}, both of these
expectations held---the cosine similarities between consecutive
gradients were higher for \bestcombo{} and \bestcomboadv{} than for
all other attacks except for when Gaussian noise is integrated, and the
transferability rates tended to be higher as the cosine similarities
increased.
We note that the method integrating Gaussian noise attains the highest
cosine similarities with slightly lower transferability than
\bestcombo{}. %
Accordingly, in addition to smoothness, there may be other factors
that can influence transferability. We leave the identification of
these factors for future work.

\section{Conclusion}

Leveraging a newly proposed means to integrate %
data augmentations
into attacks (i.e., parallel composition) and systematically studying
a broad range of augmentation methods, our work uncovered a  mostly monotonic
relationship between augmentations and transferability. Our approach
helped us identify compositions
(\bestcombo{} and \bestcomboadv{}) that outperform prior techniques
integrated into attacks; these should be
considered as a standard baseline in follow-up work on transferability. 
Our work also puts forward empirically supported theoretical
explanations for why augmentations help boost transferability.

\begin{acks}
This work has been supported in part
by grant No.\ 2023641 from the United States-Israel Binational Science
Foundation (BSF);
by a grant from the Blavatnik Interdisciplinary Cyber Research Center
(ICRC);
by Intel\textregistered{} via a Rising Star Faculty Award;
by the Israel Science Foundation (grant number 1807/23);
by a gift from KDDI Research;
by Len Blavatnik and the Blavatnik Family foundation;
by a Maof prize for outstanding young scientists;
by the Ministry of Innovation, Science \& Technology, Israel (grant
number 0603870071);
by NVIDIA via a hardware grant;
and by a grant from the Tel Aviv University Center for AI and Data
Science (TAD).
\end{acks}

\bibliographystyle{ACM-Reference-Format}
\bibliography{main.bbl}

\appendix
\section{Other Augmentations Studied}
\label{app:otheraug}

Besides the 12
augmentations %
in \secref{sec:dataaug:new}, we considered the following 34
augmentations.

\textbf{\textit{Color-space}}:
\textit{(1)} \textit{K-means color quantization};
\textit{(2)} \textit{Jpeg compression};
\textit{(3)} \textit{Voronoi};
\textit{(4)} \textit{Random invert};
\textit{(5)} \textit{Random posterize};
\textit{(6)} \textit{Random solarize};
\textit{(7)} \textit{Random equalize};
\textit{(8)} \textit{Superpixels};
\textit{(9)} \textit{Guassian noise};
\textit{(10)} \textit{Emboss};
\textit{(11)} \textit{fancy PCA (\fancyPCA{})}.

\textbf{\textit{Random Deletion}}:
\textit{(1)} \textit{Dropout};
\textit{(2)} \textit{Random crop}.

\textbf{\textit{Kernel Filters}}:
\textit{(1)} \textit{Gaussian blur};
\textit{(2)} \textit{Average blur};
\textit{(3)} \textit{Median blur};
\textit{(4)} \textit{Bilateral blur};
\textit{(5)} \textit{Motion blur};
\textit{(6)} \textit{MeanShift blur};
\textit{(7)} \textit{Edge detect};
\textit{(8)} \textit{Canny};
\textit{(9)} \textit{Average pooling};
\textit{(10)} \textit{Max pooling};
\textit{(11)} \textit{Min Poling};
\textit{(12)} \textit{Median pooling}.

\textbf{\textit{Style Transfer}}:
\textit{(1)} \textit{Clouds};
\textit{(2)} \textit{Fog};
\textit{(3)} \textit{Frost};
\textit{(4)} \textit{Snow};
\textit{(5)} \textit{Rain}.

\textbf{\textit{Spatial}}:
\textit{(1)} \textit{Random perspective};
\textit{(2} \textit{Elastic transform};
\textit{(3)} \textit{Random vertical flip};
\textit{(4)} \textit{Patch shuffle}.

\begin{table*}[t!]
\caption{\label{tab:app:imageneteps}Transferability rates (\%) on \imagenet{},
  from a Inc-v3 surrogate to other normally trained models, with
  perturbation norms $\epsilon\in\{\frac{8}{255}, \frac{24}{255}\}$
  other than the default $\epsilon{}=\frac{16}{255}$.} 
\centerline{
\resizebox{\textwidth}{!}{
\begin{tabular}{l | l | r r r r r r r r r r r r | r}
\toprule

  \textbf{$\epsilon$} & \textbf{Attack}& \textbf{Inc-v3}& \textbf{Inc-v4}&\textbf{Res-152}&\textbf{IncRes-v2}&\textbf{ Res-50}&\textbf{ Res-101}&\textbf{ \vit{}}&\textbf{ \mnas{}} & \textbf{Inc-v3$_{adv}$}& \textbf{Inc-v3$_{ens3}$}&\textbf{Inc-v3$_{ens4}$}&\textbf{ IncRes-v2$_{ens}$}&\textbf{Avg.}\\ \midrule
  \multirow{5}{*}{8/255}&\ASDTM{}&98.7&75.0&60.6&69.3&68.3&62.7&33.5&67.8&57.6&52.5&51.8&36.1&57.7\\
     &\SDTM{}&\textbf{99.7}&77.1&62.9&70.9&69.8&64.8&34.1&71.1&61.4&55.0&54.7&35.9&59.8\\
     &\sdtv{}&99.5&80.0&68.1&75.1&74.8&68.9&39.7&74.5&68.6&64.4&64.5&47.7&66.0\\
    &\undp{}&99.4&\textbf{92.2}&\textbf{82.4}&\textbf{90.6}&\textbf{85.6}&\textbf{85.1}&32.3&\textit{86.6}&48.7&43.9&40.8&20.3&64.4\\

& \gsdt{}&{99.5}&{81.3}&{71.1}&{77.6}&{77.3}&{72.0}&37.0&75.4&69.2&63.9&63.9&45.2&66.7\\

 & \bestcombo{}&\textbf{99.7}&{86.0}&{75.0}&{81.8}&{81.2}&{76.3}&{40.8}&{82.5}&{72.7}&{67.2}&{64.2}&{46.2}&\textit{70.4}\\
  & \advancedultimatecombo{}&\textbf{99.7}&\textit{89.8}&\textit{80.7}&\textit{87.1}&\textit{84.3}&\textit{81.5}&\textbf{52.6}&\textbf{87.8}&\textbf{80.7}&\textbf{77.3}&\textbf{75.4}&\textbf{58.4}&\textbf{77.8}\\
  \midrule
   \multirow{5}{*}{24/255}&\ASDTM{}&99.9&97.1&93.7&95.9&94.8&94.4&73.4&94.3&87.1&88.6&88.2&79.9&89.8\\
     &\SDTM{}&\textbf{100.0}&97.8&93.6&96.8&95.1&93.9&74.9&95.9&84.6&90.5&89.3&77.2&90.0\\
     &\sdtv{}&\textbf{100.0}&97.9&95.0&97.1&96.2&95.6&78.4&97.1&89.3&93.2&91.9&84.0&92.3\\

     &\undp{}&99.8&98.9&{97.3}&{98.9}&{97.9}&98.0{}&78.8&\textit{97.6}&84.0&82.9&78.4&59.8&88.4
\\

& \gsdt{}&\textbf{100.0}&98.6&95.7&98.3&96.7&97.0&86.7&97.5&94.5&95.9&94.9&90.4&95.1\\

 & \bestcombo{}&\textbf{100.0}&\textit{99.3}&\textit{97.4}&\textit{99.2}&\textit{97.5}&\textit{98.4}&\textit{83.8}&\textit{99.0}&\textit{92.5}&\textit{95.1}&\textit{95.0}&\textit{86.9}&\textit{94.9}\\
   & \advancedultimatecombo{}&\textbf{100.0}&\textbf{100.0}&\textbf{99.8}&\textbf{100.0}&\textbf{99.6}&\textbf{99.9}&\textbf{94.1}&\textbf{99.5}&\textbf{97.9}&\textbf{99.3}&\textbf{98.4}&\textbf{95.2}&\textbf{98.5}\\

\bottomrule
\end{tabular}
}
}
\end{table*}
\section{Attack and Augmentation Method Parameters}
\label{app:augparams}

Similarly to standard practice~\cite{wang2021boosting}, we set
the \mifgsm{}  decay factor $\mu$=1.0 , and attacks' number of
iterations $T$=10, and their step-size $\alpha=\frac{\epsilon}{T}$. 
The only exception is \undp{} found to attain low transferability rates 
with these parameters. Therefore, for \undp{}, we use default parameters
found to achieve best performance by Li et al.~\cite{li2023towards}: 
$T$=100 and $\alpha=\frac{1}{255}$.

We mostly used default or commonly used parameters of augmentation
methods. For \colorjitter{}, we performed random adjustments of image
hue $\in[-0.5,0.5]$, contrast $\in[0.5,1.5]$, saturation
$\in[0.5,1.5]$, and brightness$\in[0.5,1.5]$. For \cutout{}, we
replaced values in selected regions with zeros, and the portion of
masked areas compared to image dimensions lied in $[0.02, 0.4]$, with
aspect ratios $\in[0.4, 2.5]$. In comparison, for \randerase{}, the
dimension of masked areas relatively to the image dimensions lied in
$[0.02, 0.2]$, with aspect ratios $\in[0.3, 3.3]$. For \sharpen{},
we used the following edge-enhancement mask:
\[
  \left[ {\begin{array}{ccc}
    -0.5 & -0.5&-0.5 \\
    -0.5 & 5.0 &-0.5 \\
    -0.5 & -0.5&-0.5 \\
  \end{array} } \right].
\]
For diverse inputs, images were transformed with probability 0.5.
For the \admix{} operation, consistently with \cite{Wang21Admix}, we
randomly sampled three images from other categories for mixing as part
of the \admixdt{} attack. However, for the interest of computational
efficiency, we use only one image for mixing when composing \admix{}
with other augmentation methods. We did not find that mixing with
fewer images harmed performance. In fact, it even improved transferability
in some cases. Finally, in \cutmix{}, we picked the top left
coordinate $(r_x, r_y)$, the width, $r_w$, and height, $r_h$, of the
region to be replaced, using the formulas:
$$
r_x \sim \mathcal{U}(0, W), \quad r_w=W \sqrt{1-\lambda},\\
$$
$$
r_y \sim \mathcal{U}(0, H), \quad r_h=H \sqrt{1-\lambda},
$$
where $\mathcal{U}$ is the uniform distribution, $W$ is the image
width, $H$ is the image height, and $\lambda$ is a parameter set to 0.5.

In an attempt to enhance transferability further, we optimized the
parameters of a few augmentation methods we considered via grid
search. Except for the Gaussian kernel's size used in
translation-invariant attacks~\cite{dong2019evading}, we found that
the selected parameters had little impact on
transferability. Specifically, for translations, 
after considering Gaussian kernels of sizes $\in\{5\times{}5,
7\times{}7, 9\times{}9\}$, we set the default to $7\times{}7$, except
for \admixdt{}, for which the $9\times{}9$ kernel performed best. 
The results show that our choice of \admix{} parameters ($m$=1
and Gaussian kernel size of $9\times{}9$) improves its performance.
For \greyscale{}, we found $\omega_R$, $\omega_G$,
and $\omega_B$ had little impact on transferability, as long as the
weight assigned to each channel was $>$0.1. Accordingly, we set
$\omega_R$, $\omega_G$, and $\omega_B$ to 0.299, 0.587, and 0.114,
respectively, per commonly used values (e.g., in the Python
\texttt{PyTorch} package\footnote{\url{https://bit.ly/3ynCyUD}}). Finally,
for \chanshuffle{}, we only swapped the blue and green channels, as
this led to a minor improvement compared to swapping all three channels.

Finally, we clarify that each of our attack combinations emits the original
image once, alongside the transformed images. Moreover, when
aggregating the gradients, the gradients of the original and
transformed images are assigned equal weights. We tested whether
weighting the gradients differently (e.g., assigning higher or lower
weight to the original sample) can help improve transferability using
the \greyscale{} method. However, we found that equal weights attained
the best results.

\section{Augmentations Included in \bestcomboadv{}}
\label{app:bestcomboadv-augs}

The following augmentations are composed together in \bestcomboadv{}:
\dst{}; \cutmix{}; random crop; random rotation; \colorjitter{};
Gaussian blur; random affine; random perspective; \fancyPCA{}; elastic
transform; horizontal flip; \admix{}; random invert; random
solarize; \autoaug{}; \neuraltrans{};  \cutout{}; JPEG compression;
K-means color quantization; superpixels; average blur; median blur;
motion blur; MeanShift blur; edge detect; canny; average pooling; min
pooling; patch shuffle; fog; frost; rain; Guassian noise. 

\section{Transferability Rates on \imagenet{}}
\label{app:fullresults}
\label{app:imagenet}

\tabref{tab:app:imageneteps} shows the transferability rates on
\imagenet{} from Inc-v3 to other normally traiend models with varied
perturbation norms (i.e., values of $\epsilon$).

\end{document}